\newcommand{\filluptopage}[1]{%
  \clearpage
  \loop\ifnum\value{page}<#1\relax
    \null\clearpage
  \repeat
  \loop\ifnum\value{page}=#1\relax
    \null\clearpage
  \repeat
}
\def\blfootnote{\xdef\@thefnmark{}\@footnotetext}
\newcommand{\veps}{\varepsilon}
\def\1{\bm{1}}
\DeclareMathAlphabet{\mathsfit}{\encodingdefault}{\sfdefault}{m}{sl}
\SetMathAlphabet{\mathsfit}{bold}{\encodingdefault}{\sfdefault}{bx}{n}
\newcommand{\E}{\mathbb{E}}
\newcommand{\R}{\mathbb{R}}
\DeclareMathOperator*{\argmin}{arg\,min}
\newcommand{\MST}{\mathrm{MST}}
\newcommand{\BOX}{\mathrm{Box}}
\newcommand{\Graph}{\mathcal{G}}
\newcommand{\PH}{\mathrm{PH}}
\newcommand{\dimPH}{\mathrm{dim}_\PH}
\newcommand{\dimMST}{\mathrm{dim}_\MST}
\newcommand{\dimBox}{\mathrm{dim}_\BOX}
\newcommand{\chainComplex}{\mathcal{C}}
\newcommand{\compX}{\mathcal{W}}
\newcommand{\cech}{\text{\v{C}ech}_r}
\newcommand{\VR}{\text{VR}}
\newcommand{\rips}{\text{VR}_r}
\newcommand{\birth}{\mathrm{birth}}
\newcommand{\death}{\mathrm{death}}
\newcommand{\PHrobust}{\PH_0\mathrm{RANSAC}}
\newcommand{\Q}{\mathbf{Q}}
\newtheorem{thm}{Theorem}
\newtheorem{prop}{Proposition}
\newtheorem{dfn}{Definition}
\newcommand{\etal}{\textit{et al}. }
\newcommand{\ie}{\textit{i}.\textit{e}.}
\newcommand{\eg}{\textit{e}.\textit{g}.}
\newtheorem{assumption}{\textbf{H}\hspace{-3pt}}
\Crefname{assumption}{\textbf{H}\hspace{-3pt}}{\textbf{H}\hspace{-3pt}}
\crefname{algorithm}{\text{Alg.}}{\text{Alg.}}
\crefname{assumption}{\textbf{H}}{\textbf{H}}
\crefname{equation}{\text{Eq}}{\text{Eq}}
\crefname{definition}{\text{Dfn.}}{\text{Dfn.}}
\crefname{proposition}{\text{Prop.}}{\text{Prop.}}
\crefname{lemma}{\text{Lemma}}{\text{Lemma}}
\crefname{dfn}{\text{Dfn.}}{\text{Dfn.}}
\crefname{thm}{\text{Thm.}}{\text{Thm.}}
\crefname{tab}{\text{Tab.}}{\text{Tab.}}
\crefname{fig}{\text{Fig.}}{\text{Fig.}}
\crefname{table}{\text{Tab.}}{\text{Tab.}}
\crefname{figure}{\text{Fig.}}{\text{Fig.}}
\crefname{section}{\text{Sec.}}{\text{Sec.}}
\newcommand{\insertimageC}[5]{ %
\begin{figure}[#5]
\centering
\includegraphics[width=#1\linewidth, clip=true]{content/figures/#2}
\vspace{-1.5em}
\caption{#3}
\label{#4}
\end{figure}
}
\newcommand{\Alg}{\mathcal{A}}
\newcommand{\wb}{w}
\newcommand{\rset}{\mathbb{R}}
\newcommand{\pr}{\mathbb{P}}
\newcommand{\Dcal}{\mathcal{D}}
\newcommand{\Rcal}{\mathcal{R}}
\newcommand{\Wcal}{\mathcal{W}}
\newcommand{\Xcal}{\mathcal{X}}
\newcommand{\Ycal}{\mathcal{Y}}
\newcommand{\Zcal}{\mathcal{Z}}
\newcommand{\Mut}{M}
\Crefname{assumption}{\textnormal{\textbf{H}}\hspace{-3pt}}{\textnormal{\textbf{H}}\hspace{-3pt}}
\crefname{assumption}{\textnormal{\textbf{H}}}{\textnormal{\textbf{H}}}
\title{Intrinsic Dimension, Persistent Homology and Generalization in Neural Networks}
\author{Tolga Birdal\\
  {Stanford University}\\
  \texttt{tbirdal@stanford.edu}\\
  \And
  Aaron Lou\\
  {Stanford University}\\
  \texttt{aaronlou@stanford.edu}\\
  \And
  Leonidas Guibas\\
  {Stanford University}\\
  \texttt{guibas@cs.stanford.edu}\\
  \And
  Umut \c{S}im\c{s}ekli\\
  {INRIA \& ENS -- PSL Research University}\\
  \texttt{umut.simsekli@inria.fr}
}
\begin{document}

\maketitle

\begin{abstract}

Disobeying the classical wisdom of statistical learning theory, modern deep neural networks generalize well even though they typically contain millions of parameters. Recently, it has been shown that the trajectories of iterative optimization algorithms can possess \emph{fractal structures}, and their generalization error can be formally linked to the complexity of such fractals. This complexity is measured by the fractal's \emph{intrinsic dimension}, a quantity usually much smaller than the number of parameters in the network. Even though this perspective provides an explanation for why overparametrized networks would not overfit, computing the intrinsic dimension (\eg, for monitoring generalization during training) is a notoriously difficult task,  where existing methods typically fail even in moderate ambient dimensions. In this study, we consider this problem from the lens of topological data analysis (TDA) and develop a generic computational tool that is built on rigorous mathematical foundations. By making a novel connection between learning theory and TDA, we first illustrate that the generalization error can be equivalently bounded in terms of a notion called the 'persistent homology dimension' (PHD), where, compared with prior work, our approach does not require any additional geometrical or statistical assumptions on the training dynamics. Then, by utilizing recently established theoretical results and TDA tools, we develop an efficient algorithm to estimate PHD in the scale of modern deep neural networks and further provide visualization tools to help understand generalization in deep learning. Our experiments show that the proposed approach can efficiently compute a network's intrinsic dimension in a variety of settings, which is predictive of the generalization error. 

\end{abstract}

\section{Introduction}
\label{sec:intro}
In recent years, deep neural networks (DNNs) have become the de facto machine learning tool and have revolutionized a variety of fields such as natural language processing~\cite{devlin2018bert}, image perception~\cite{krizhevsky2012imagenet,rempe2021humor}, geometry processing~\cite{qi2017pointnet,zhao2020quaternion} and 3D vision~\cite{deng2018ppfnet,gojcic2020weakly}. Despite their widespread use, little is known about their theoretical properties. Even now the top-performing DNNs are designed by trial-and-error, a pesky, burdensome process for the average practitioner~\cite{elsken2019neural}. Furthermore, even if a top-performing architecture is found, it is difficult to provide performance guarantees on a large class of real-world datasets.

This lack of theoretical understanding has motivated a plethora of work focusing on explaining what, how, and why a neural network learns. To answer many of these questions, one naturally examines the generalization error, a measure quantifying the differing performance on train and test data since this provides significant insights into whether the network is learning or simply memorizing~\cite{zhang2021understanding}. However, generalization in neural networks is particularly confusing as it refutes the classical proposals of statistical learning theory such as uniform bounds based on the Rademacher complexity~\cite{bartlett2002rademacher} and the Vapnik–Chervonenkis (VC) dimension~\cite{vapnik1968uniform}. 

Instead, recent analyses have started focusing on the dynamics of deep neural networks. \cite{neyshabur2017,blier2018description,gao2016degrees} provide analyses on the final trained network, but these miss out on critical training patterns. To remedy this, a recent study 
\cite{simsekli2020hausdorff} connected generalization and the heavy tailed behavior of \emph{network trajectories}--a phenomenon which had already been observed in practice~\cite{simsekli2019tail,csimcsekli2019heavy,simsekli2020fractional,gurbuzbalaban2021heavy,camuto2021asymmetric,hodgkinson2020multiplicative,martin2019traditional}. \cite{simsekli2020hausdorff} further showed that the generalization error can be linked to the \emph{fractal dimension} of a parametric hypothesis class (which can then be taken as the optimization trajectories). Hence, the fractal dimension acts as a `capacity metric' for generalization.

While \cite{simsekli2020hausdorff} brought a new perspective to generalization, several shortcomings prevent application in everyday training. In particular, their construction requires several conditions which may be infeasible in practice: (i) topological regularity conditions on the hypothesis class for fast computation, (ii) a Feller process assumption on the training algorithm trajectory, and that (iii) the Feller process exhibits a specific diffusive behavior near a minimum. Furthermore, the capacity metrics in \cite{simsekli2020hausdorff} are not optimization friendly and therefore can't be incorporated into training.

In this work, we address these shortcomings by exploiting the recently developed connections between fractal dimension and topological data analysis (TDA). First, by relating the \emph{box dimension}~\cite{schroeder2009fractals} and the recently proposed \emph{persistent homology (PH) dimension}~\cite{schweinhart2020fractal}, we relax the assumptions in \cite{simsekli2020hausdorff} to develop a topological intrinsic dimension (ID) estimator. Then, using this estimator we develop a general tool for \emph{computing} and \emph{visualizing} generalization properties in deep learning. Finally, by leveraging recently developed differentiable TDA tools~\cite{Hofer17a,Hofer19a}, we employ our ID estimator to regularize training towards solutions that generalize better, even without having access to the test dataset.

Our experiments demonstrate that this new measure of intrinsic dimension correlates highly with generalization error, regardless of the choice of optimizer. Furthermore, as a proof of concept, we illustrate that our topological regularizer is able to improve the test accuracy and lower the generalization error. In particular, this improvement is most pronounced when the learning rate/batch size normally results in a poorer test accuracy.

Overall, our contributions are summarized as follows:
\begin{itemize}[itemsep=0.25pt,topsep=0pt,leftmargin=*]
    \item We make a novel connection between statistical learning theory and TDA in order to develop a generic computational framework for the generalization error.
    We remove the topological regularity condition and the decomposable Feller assumption on training trajectories, which were required in \cite{simsekli2020hausdorff}. This leads to a more generic capacity metric.
    \item Using insights from our above methodology, we leverage the differentiable properties of persistent homology to regularize neural network training.  Our findings also provide the first steps towards theoretically justifying recent topological regularization methods~\cite{bruel2019topology,chen2019topological}.
    \item We provide extensive experiments to illustrate the theory, strength, and flexibility of our framework.
\end{itemize}
We believe that the novel connections and the developed framework will open new theoretical and computational directions in the theory of deep learning. To foster further developments at the the intersection of persistent homology and statistical learning theory, we release our source code under: \href{https://github.com/tolgabirdal/PHDimGeneralization}{https://github.com/tolgabirdal/PHDimGeneralization}.

\vspace{-3mm}
\section{Related Work}
\label{sec:related}

\paragraph{Intrinsic dimension in deep networks}
Even though a large number of parameters are required to train deep networks~\cite{frankle2018lottery}, modern interpretations of deep networks avoid correlating model over-fitting or generalization to parameter counting. Instead, contemporary studies measure model complexity through the degrees of freedom of the parameter space~\cite{janson2015effective,gao2016degrees}, compressibility (pruning)~\cite{blier2018description} or intrinsic dimension~\cite{ansuini2019intrinsic,li2018measuring,ma2018dimensionality}. 
Tightly related to the ID, Janson~\etal~\cite{janson2015effective} investigated the \emph{degrees of freedom}~\cite{ghrist2010configuration} in deep networks and expected difference between test error and training error. 
Finally, LDMNet~\cite{zhu2018ldmnet} explicitly penalizes the ID regularizing the network training. 

\paragraph{Generalization bounds}

Several studies have provided theoretical justification to the observations that trained neural networks live in a lower-dimensional space, and this is related to the generalization performance. In particular, compression-based generalization bounds \cite{arora2018stronger,suzuki2018spectral,Suzuki2020Compression,hsu2021generalization,barsbey2021heavy} have shown that the generalization error of a neural network can be much lower if it can be accurately represented in lower dimensional space. Approaching the problem from a geometric viewpoint, \cite{simsekli2020hausdorff} showed that the generalization error can be formally linked to the fractal dimension of a parametric hypothesis class. This dimension indeed the plays role of the intrinsic dimension, which can be much smaller than the ambient dimension. When the hypothesis class is chosen as the trajectories of the training algorithm, \cite{simsekli2020hausdorff} further showed that the error can be linked to the heavy-tail behavior of the trajectories.

\paragraph{Deep networks \& topology} Previous works have linked neural network training and topological invariants, although all analyze the final trained network \cite{fernandez2021determining}. For example, in \cite{Rieck2019NeuralPA}, the authors construct Neural Persistence, a measure on neural network layer weights. They furthermore show that Neural Persistence reflects many of the properties of convergence and can classify weights based on whether they overfit, underfit, or exactly fit the data. In a parallel line of work, \cite{Dollr2019WhatDI} analyze neural network training by calculating topological properties of the underlying graph structure. This is expanded upon in \cite{Corneanu2020ComputingTT}, where the authors compute correlations between neural network weights and show that the homology is linked with the generalization error.

However, these previous constructions have been done mostly in an adhoc manner. As a result, many of the results are mostly empirical and work must still be done to show that these methods hold theoretically. Our proposed method, by contrast, is theoretically well-motivated and uses tools from statistical persistent homology theory to formally links the generalization error with the network training trajectory topology.

We also would like to note that prior work has incorporated topological loss functions to help normalize training. In particular, \cite{bruel2019topology} constructed a topological normalization term for GANs to help maintain the geometry of the generated 3d point clouds.

\section{Preliminaries Technical Background}
\label{sec:bg}
We imagine a point cloud $W=\{w_i\in\R^{d}\}$ as a geometric realization of a $d$-dimensional topological space $W \subset \compX \subset \rset^d$. $B_\delta(x) \subset \rset^d$ denotes the closed ball centered around $x \in \rset^d$ with radius $\delta$.

\begin{wrapfigure}[15]{r}{0.25\textwidth}
    \vspace{-4.75mm}
\includegraphics[width=0.25\textwidth]{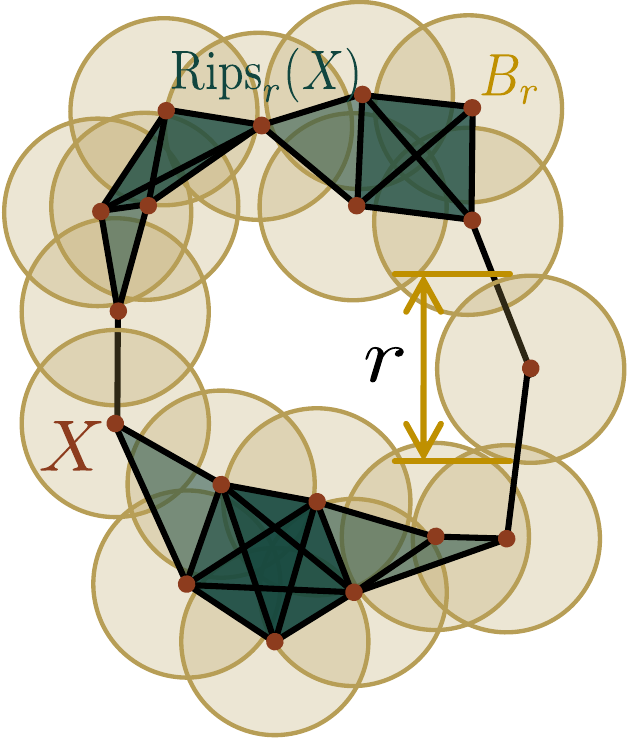}\vspace{-5mm}
    \caption{\small A visualization of a Vietoris-Rips complex computed using persistent homology (PH).}\vspace{-3mm}
    \label{fig:PH}
    \vspace{-3mm}
\end{wrapfigure}
\paragraph{Persistent Homology}
From a topological perspective, $\compX$ can be viewed a \emph{cell complex} composed of the disjoint union of $k$-dimensional balls or \emph{cells} $\sigma\in\compX$ \emph{glued} together. For $k=0,1,2,\dots$, we form a $\emph{chain complex}$ $\chainComplex(\compX) = \dots C_{k+1}(\compX)\xrightarrow[]{\partial_{k+1}} C_k(\compX) \xrightarrow[]{\partial_{k}} \dots$ by sequencing \emph{chain groups} $C_k(\compX)$, whose elements are equivalence classes of cycles, via boundary maps $\partial_k : C_k(\compX)\mapsto C_{k-1}(\compX)$ with $\partial_{k-1}\circ\partial_k \equiv 0$. In this paper, we work with finite \emph{simplicial complexes} restricting the cells to be simplices. 

The $k^{\mathrm{th}}$ homology group or \emph{$k$-dimensional homology} is then defined as the equivalence classes of $k$-dimensional cycles who differ only by a boundary, or in other words, the quotient group $H_k(\compX)=Z_k(\compX) / Y_k(\compX)$ where $Z_k(\compX)=\mathrm{ker}\,\partial_k$ and $Y_k(\compX)=\mathrm{im}\,\partial_{k+1}$. The generators or \emph{basis} of $H_0(\compX), H_1(\compX) \text{ and } H_2(\compX)$ describe the shape of the topological space $\compX$ by its connected components, holes and cavities, respectively. Their ranks are related to the \emph{Betti numbers} \ie $\,\beta_k=\mathrm{rank}(H_k)$.

\begin{dfn}[\v{C}ech and Vietoris-Rips Complexes]
    For $W$ a set of fine points in a metric space, the \v{C}ech cell complex $\cech(W)$ is constructed using the intersection of $r$-balls around $W$, $B_r(W)$: $\cech(W)=\big\{ Q\subset W : \cap_{x\in\Q} B_r(x)\neq 0 \big\}$. The construction of such complex is intricate. Instead, the Vietoris-Rips complex $\rips(W)$ closely approximates $\cech(W)$ using only the \emph{pairwise distances} or the intersection of two $r$-balls~\cite{reani2021cycle}: $\compX_r=\rips(W)=\big\{ Q\subset W : \forall x,x^\prime \in Q, \, B_r(x)\cap B_r(x^\prime)\neq 0 \big\}$. 
\end{dfn}
\begin{dfn}[Persistent Homology]
PH indicates a multi-scale version of homology applied over a \emph{filtration} $\{\compX_t\}_t:= \VR(W)\,:\,\forall (s\leq t)\,\compX_s \subset \compX_t \subset \compX$, keeping track of holes created (born) or filled (died) as $t$ increases. Each \emph{persistence module} $\PH_k(\VR(W))=\{\gamma_i\}_i$ keeps track of a single $k$-\emph{persistence cycle} $\gamma_i$ from $\birth$ to $\death$. We denote the entire lifetime of cycle $\gamma$ as $I(\gamma)$ and its length as $|I(\gamma)|=\death(\gamma)-\birth(\gamma)$. We will also use \emph{persistence diagrams}, 2D plots of all persistence lifetimes (death vs.\ birth). Note that for $\PH_0$, the \text{\v{C}ech} and VR complexes are equivalent.
\end{dfn}

Lifetime intervals are instrumental in TDA as they allow for extraction of topological features or summaries. Note that, each birth-death pair can be mapped to the cells that respectively created and destroyed the homology class,
defining a unique map for a persistence diagram, which lends itself to \emph{differentibility}~\cite{bruel2019topology,Hofer19a,Hofer17a}. We conclude this brief section by referring the interested reader to the well established literature of persistent homology~\cite{carlsson2014topological,edelsbrunner2010computational} for a thorough understanding.

\paragraph{Intrinsic Dimension} The intrinsic dimension of a space can be measured by using various  notions. In this study, we will consider two notions of dimension, namely the upper-box dimension (also called the Minkowski dimension) and the persistent homology dimension. The box dimension is based on covering numbers and can be linked to generalization via \cite{simsekli2020hausdorff}, whereas the PH dimension is based on the notions defined earlier in this section.

We start by the box dimension.
\begin{dfn}[Upper-Box Dimension]
    \label{def:mink}
    For a \textbf{bounded metric space} $\Wcal$, let $N_\delta(\Wcal)$ denote the maximal number of disjoint closed $\delta$-balls with centers in $\Wcal$. The \emph{upper box dimension} is defined as:
    \begin{equation}
        \dimBox \Wcal = \limsup_{\delta\to 0} \Big({\log(N_\delta(\Wcal))}/{\log(1/\delta)}\Big).
    \end{equation}
\end{dfn}

We proceed with the PH dimension. First let us define an intermediate construct, which will play a key role in our computational tools.
\begin{dfn}[$\alpha$-Weighted Lifetime Sum]
    For a \textbf{finite set} $W \subset \Wcal \subset \rset^d$,
    the weighted $i^\mathrm{th}$ homology lifetime sum is defined as follows:
    \begin{align}
        E^i_\alpha(W) = \sum\limits_{\gamma\in\PH_i( \mathrm{VR}(W) )} |I(\gamma)|^\alpha,
    \end{align}
    where $\PH_i(\mathrm{VR}(W))$ is the $i$-dimensional persistent homology of the \v{C}ech complex on a finite point set $W$ contained in $\Wcal$ and $|I(\gamma)|$ is the persistence lifetime as explained above.
\end{dfn}

Now, we are ready to define the PH dimension, which is the key notion in this paper.
\begin{dfn}[Persistent Homology Dimension]
\label{def:phdim}
The $\PH_i$-dimension of a \textbf{bounded metric space} $\Wcal$ is defined as follows:
\begin{equation}
    \dimPH^i \Wcal := \inf \big\{ \alpha \,:\, E^i_\alpha(W) < C; \quad \exists C>0, \forall \text{ finite } W\subset \Wcal\big\}.
\end{equation}
\end{dfn}
In words, $\dimPH^i \Wcal$ is the smallest exponent $\alpha$ for which $E_\alpha^i$ is uniformly bounded for all finite subsets of $\Wcal$.

\section{Generalization Error via Persistent Homology Dimension}
\label{sec:method}

In this section, we will illustrate that the generalization error can be linked to the $\PH_0$ dimension. Our approach is based on the following fundamental result. 
\begin{thm}[\cite{kozma2006minimal,schweinhart2019persistent}]
\label{thm:kozma}
Let $\Wcal \subset \rset^d$ be a bounded set. Then, it holds that:
$$\dimPH \Wcal := \dimPH^0 \Wcal = \dimBox \Wcal.$$
\end{thm}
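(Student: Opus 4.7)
The plan is to establish both inequalities $\dimPH^0 \Wcal \leq \dimBox \Wcal$ and $\dimBox \Wcal \leq \dimPH^0 \Wcal$, using the classical correspondence between zero-dimensional persistence and the minimum spanning tree (MST). First I would show that for any finite $W \subset \Wcal$, the zero-dimensional persistence of $\mathrm{VR}(W)$ records exactly (up to a constant depending on the VR convention) the MST edge lengths: every point is born at $r=0$, and as $r$ grows the edges of $\mathrm{VR}_r(W)$ appear in order of pairwise distance, merging components in precisely the order produced by Kruskal's algorithm. Consequently,
\begin{equation*}
E^0_\alpha(W) \;=\; \sum_{\gamma \in \PH_0(\mathrm{VR}(W))} |I(\gamma)|^\alpha \;=\; c_\alpha \sum_{e \in \MST(W)} \ell(e)^\alpha,
\end{equation*}
which reduces the statement to a classical question about the $\alpha$-power-weighted MST length of subsets of $\Wcal$.

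For the easier direction $\dimBox \Wcal \leq \dimPH^0 \Wcal$, I would use a direct packing argument. For each $\delta > 0$ pick a maximal $\delta$-separated subset $W_\delta \subset \Wcal$; its cardinality is comparable to $N_\delta(\Wcal)$. Every MST edge on $W_\delta$ then has length at least $\delta$, so
\begin{equation*}
E^0_\alpha(W_\delta) \;\gtrsim\; \bigl(N_\delta(\Wcal) - 1\bigr)\, \delta^\alpha.
\end{equation*}
Whenever $\alpha > \dimPH^0 \Wcal$, the left side is uniformly bounded in $\delta$, which forces $N_\delta(\Wcal) \lesssim \delta^{-\alpha}$ and hence $\dimBox \Wcal \leq \alpha$; letting $\alpha \searrow \dimPH^0 \Wcal$ closes this direction.

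The harder direction is $\dimPH^0 \Wcal \leq \dimBox \Wcal$, and this is where I expect the main obstacle to sit. The plan is a dyadic decomposition of MST edges by length scale: fix $s > \dimBox \Wcal$, so that $N_\delta(\Wcal) \lesssim \delta^{-s}$, and for any finite $W \subset \Wcal$ partition its MST edges into buckets $G_k = \{ e : \ell(e) \in (2^{-k-1}, 2^{-k}]\}$. The key geometric lemma is that $|G_k| \lesssim N_{c\,2^{-k}}(\Wcal)$ for a small constant $c$; the reason, going back to Steele, is that the MST cycle property prevents the endpoints of long MST edges from clustering---if two long edges had endpoints too close to each other, one could reroute and produce a strictly shorter spanning tree, contradicting minimality. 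Granted this lemma, one estimates
\begin{equation*}
\sum_{e \in \MST(W)} \ell(e)^\alpha \;\leq\; \sum_{k \geq k_0} |G_k|\, 2^{-k\alpha} \;\lesssim\; \sum_{k \geq k_0} 2^{k(s-\alpha)},
\end{equation*}
which is summable to a bound independent of $W$ as soon as $\alpha > s$. Hence $\dimPH^0 \Wcal \leq s$, and sending $s \searrow \dimBox \Wcal$ completes the proof.

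The principal technical hurdle is the packing bound on endpoints of long MST edges, uniformly over all finite $W \subset \Wcal$; a naive packing of $\Wcal$ itself does not suffice, and one must exploit the minimality of the MST to rule out configurations in which many long edges share nearby endpoints. Rather than redo this combinatorial bookkeeping from scratch, I would invoke the sharp version already proved in~\cite{kozma2006minimal} in the pure MST setting, together with the translation to persistent homology carried out in~\cite{schweinhart2019persistent}, which together yield the equality as stated.
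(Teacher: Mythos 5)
The paper does not actually prove this statement: it is imported wholesale from Kozma--Lotker--Stupp \cite{kozma2006minimal} (the minimum-spanning-tree formulation) and Schweinhart \cite{schweinhart2019persistent} (the translation to $\PH_0$), and the appendix merely restates the chain $\dimMST \Wcal = \dimPH^0 \Wcal = \dimBox \Wcal$ with the same citations. So there is no in-paper argument to compare against line by line; what can be said is that your sketch is a faithful reconstruction of the proof that lives in those references. The $\PH_0$--MST dictionary via Kruskal's algorithm is exactly how the reduction to the MST setting is carried out (all $0$-cycles are born at $0$ and die at the merge times, which are the MST edge lengths up to the \v{C}ech-versus-Vietoris--Rips scale convention); the packing lower bound on a maximal separated set is the easy direction; and the dyadic bucketing of MST edge lengths together with a packing bound on the number of edges longer than $r$, proved by the cycle/exchange property of minimum spanning trees, is precisely the main lemma of \cite{kozma2006minimal}, which you correctly single out as the one genuinely hard step and defer to the source. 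Two details are worth making explicit if you wrote this out in full: (i) your claim that $\alpha > \dimPH^0\Wcal$ forces $E^0_\alpha$ to be uniformly bounded uses monotonicity of uniform boundedness in $\alpha$, which holds because $\Wcal$ is bounded, so every lifetime is at most $D = \mathrm{diam}(\Wcal)$ and $|I(\gamma)|^{\alpha} \le D^{\alpha - \alpha_0}\,|I(\gamma)|^{\alpha_0}$ for $\alpha_0 < \alpha$; (ii) the constant $c_\alpha$ in your $\PH_0$--MST identity is a harmless fixed power of the filtration convention and does not affect the infimal exponent. Neither point is a gap; the proposal is sound and matches the cited proofs in structure.
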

In the light of this theorem, we combine the recent result showing that the generalization error can be linked to the box dimension \cite{simsekli2020hausdorff}, and Theorem~\ref{thm:kozma}, which  shows that, for bounded subsets of $\rset^d$, the box dimension and the PH dimension of order $0$ agree.

By following the notation of \cite{simsekli2020hausdorff}, we consider a standard supervised learning setting, where the data space is denoted by $\Zcal =  \Xcal \times \Ycal$, and $\Xcal$ and $\Ycal$ respectively denote the features and the labels. We assume that the data is generated via an unknown data distribution $\Dcal$ and we have access to a training set of $n$ points, i.e., $S = \{z_1, \dots, z_n \}$, with the samples $\{z_i\}_{i=1}^n$ are independent and identically (i.i.d.) drawn from $\Dcal$.

We further consider a parametric hypothesis class $\Wcal \subset \rset^d$, that potentially depends on $S$. We choose $\Wcal$ to be \emph{optimization trajectories} given by a training algorithm $\Alg$, 
which returns the entire (random) trajectory of the network weights in the time frame $[0,T]$, such that $[\Alg(S)]_t =w_t$ being the network weights returned by $\Alg$ at `time' $t$, and $t$ is a continuous iteration index. Then, in the set $\Wcal$, we collect all the network weights that appear in the optimization trajectory: $$\Wcal := \{\wb \in \rset^d : \exists t \in [0,T],  \wb = [\Alg(S)]_t \}$$
where we will set $T=1$, without loss of generality.

To measure the quality of a parameter vector $w \in \Wcal$, we use a loss function $\ell : \rset^d \times \Zcal \mapsto \rset_+$, such that $\ell(\wb, z)$ denotes the loss corresponding to a single data point $z$. We then denote the population and empirical risks respectively by $\Rcal(\wb) := \E_z [\ell(\wb,z)] $ and $\hat{\Rcal}(\wb,S) := \frac1{n} \sum\nolimits_{i=1}^n \ell(\wb, z_i)$. The generalization error is hence defined as $|\hat{\Rcal}(\wb,S)- \Rcal(w)|$.

We now recall \cite[Asssumption H4]{simsekli2020hausdorff}, which is a form of algorithmic stability \cite{bousquet2002stability}. Let us first introduce the required notation.
For any $\delta>0$, consider the fixed grid on $\rset^d$,
$$G = \left\{\left( \frac{(2j_1+1) \delta}{2\sqrt{d}},\dots,\frac{(2j_d+1) \delta}{2\sqrt{d}} \right): j_i \in \mathbb{Z}, i=1,\dots,d\right\},$$
and define the set $N_\delta := \{x\in G: B_\delta(x)\cap \Wcal \neq \emptyset\}$, that is the collection of the centers of each ball that intersect $\Wcal$.
\begin{assumption}\label{asmp:decoupling2}
  Let $\Zcal^\infty := (\Zcal \times \Zcal \times \cdots)$ denote the countable product endowed with the product topology and let $\mathfrak{B}$ be the Borel $\sigma$-algebra generated by $\Zcal^\infty$.
Let $\mathfrak{F}, \mathfrak{G}$ be the sub-$\sigma$-algebras of $\mathfrak{B}$ generated by the collections of random variables given by
$\{ \hat{\Rcal}(w,S): w \in \rset^d, n \geq 1\}$ and
$\Big\{ \mathds{1}\left\{w\in N_{\delta}\right\}: \delta\in \mathbb{Q}_{>0}, w\in G, n \geq 1 \Big\}$ respectively.
There exists a constant $\Mut \geq 1$ such that for any $A\in \mathfrak{F}$, $B\in \mathfrak{G}$ we have
$\pr\left[ A \cap B\right] \leq \Mut \pr\left[ A \right] \pr[B].$
\end{assumption}

The next result forms our main observation, which will lead to our methodological developments.
\begin{prop}
\label{thm:dimm_app2}
Let $\Wcal \subset \rset^d$ be a (random) compact set. Assume that \Cref{asmp:decoupling2} holds, $\ell$ is bounded by $B$ and $L$-Lipschitz continuous in $\wb$. Then, for $n$
sufficiently large, we have
  \begin{align}
  \label{eqn:genbound}
    \sup\limits_{\wb \in \Wcal} |\hat{\Rcal}(\wb,S)-\Rcal(\wb)| &\leq  2B\sqrt{\frac{[\dimPH \Wcal+1] \log^2(nL^2)}{n} + \frac{ \log(7\Mut/\gamma)}{n}},
  \end{align}
with probability at least $1- \gamma$ over $S \sim \Dcal^{\otimes n}$. 
\end{prop}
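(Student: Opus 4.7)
The plan is essentially a two-line reduction: combine the box-dimension generalization bound of Simsekli et al.\ (2020) with \Cref{thm:kozma} of Kozma--Schweinhart, which equates $\dimBox$ and $\dimPH^0$ on bounded subsets of $\rset^d$. The novelty of the statement is conceptual (replacing a Minkowski-type quantity with a PH quantity that is computable from a finite point cloud), not technical, so the proof should be short.

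\textbf{Step 1 (Invoke the box-dimension bound).} The plan is to apply the main generalization result of \cite{simsekli2020hausdorff} to our setting. The hypotheses needed there are (i) that $\Wcal$ is a (random) compact subset of $\rset^d$, (ii) that $\ell$ is bounded by $B$ and $L$-Lipschitz in $\wb$, and (iii) the algorithmic-stability/decoupling condition, which is exactly our \Cref{asmp:decoupling2} (stated verbatim from their H4). This yields, for $n$ sufficiently large,
\begin{equation*}
    \sup_{\wb \in \Wcal} |\hat{\Rcal}(\wb,S)-\Rcal(\wb)| \leq 2B\sqrt{\frac{[\dimBox \Wcal + 1]\log^2(nL^2)}{n} + \frac{\log(7\Mut/\gamma)}{n}},
\end{equation*}
with probability at least $1-\gamma$ over $S \sim \Dcal^{\otimes n}$.

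\textbf{Step 2 (Swap $\dimBox$ for $\dimPH$).} Since $\Wcal$ is compact, it is in particular bounded, so \Cref{thm:kozma} applies pathwise and gives $\dimBox \Wcal = \dimPH^0 \Wcal = \dimPH \Wcal$. Because this identity holds deterministically on every realization of the random set $\Wcal$, it holds almost surely, and substituting into the inequality of Step~1 preserves the $1-\gamma$ probability guarantee and yields the claimed bound.

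\textbf{Where difficulty could hide.} There is no serious analytic obstacle, so the only thing to be careful about is the interaction of randomness with \Cref{thm:kozma}: one must confirm that $\Wcal$ is almost surely bounded (immediate from the compactness hypothesis) and that measurability of $\dimPH \Wcal$ as a random variable is not needed separately, since we are merely substituting a deterministic equality inside a high-probability event already established in Step~1. Provided the cited bound is applied under the same assumptions it was proved under, no additional work (e.g.\ covering-number computations, Lipschitz concentration, or re-deriving the decoupling argument) is required in this proof.
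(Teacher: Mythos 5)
Your proposal matches the paper's proof essentially verbatim: the paper also establishes \eqref{eqn:genbound} with $\dimBox \Wcal$ in place of $\dimPH \Wcal$ (by re-running the proof technique of Theorem 2 of \cite{simsekli2020hausdorff} under \Cref{asmp:decoupling2}) and then substitutes $\dimBox \Wcal = \dimPH \Wcal$ via Theorem~\ref{thm:kozma} using boundedness of the compact set $\Wcal$. The only cosmetic difference is that the paper invokes the \emph{proof technique} of the cited theorem rather than its statement, but the argument is the same two-step reduction.
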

\begin{proof}
By using the same proof technique as \cite[Theorem 2]{simsekli2020hausdorff}, we can show that \eqref{eqn:genbound} holds with $\dimBox \Wcal$ in place of $\dimPH \Wcal$. Since $\Wcal$ is bounded, we have $\dimBox \Wcal = \dimPH \Wcal$ by Theorem~\ref{thm:kozma}. %
The result follows.
\end{proof}
This result shows that the generalization error of the trajectories of a training algorithm is deeply linked to its topological properties as measured by the PH dimension. Thanks to novel connection, we have now access to the rich TDA toolbox, to be used for different purposes.

\subsection{Analyzing Deep Network Dynamics via Persistent Homology}

 \begin{algorithm2e} [t!]
 \DontPrintSemicolon
 \SetKwInOut{Input}{input}
 \SetKwInOut{Output}{output}
 \Input{The set of iterates $W=\{w_i\}_{i=1}^K$, smallest sample size $n_{\mathrm{min}}$, and a skip step $\Delta$, $\alpha$}
 \Output{$\dimPH W$}
 $n \gets n_{\mathrm{min}}$, \quad  $E \gets []$\\
 \While{$n\leq K$}{
    $W_n\gets \mathrm{sample}(W, n)$\,{\color{purple} \small \tcp{random sampling}}
    $\mathcal{W}_n \gets \VR(W_n)$\, {\color{purple} \small \tcp{Vietoris-Rips filtration}} 
    $E[i] \gets E_{\alpha}(\mathcal{W}_n) \triangleq \sum_{\gamma\in\PH_{0}(\mathcal{W}_n)} |I(\gamma)|^\alpha$\, {\color{purple} \small \tcp{compute lifetime sums from PH}}
    $n\gets n+\Delta$
    }
 $m, b \gets \mathrm{fitline}\left(\log(n_{\text{min}}:\Delta:K),\, \log(E)\right)$\, {\color{purple} \small \tcp{power law on $E^i_1(W)$}}
 $\dimPH W \gets \frac{\alpha}{1-m}$
 \caption{Computation of $\dimPH$.}
 \label{algo:dimPH}
 \end{algorithm2e}

By exploiting TDA tools, our goal in this section is to develop an algorithm to compute $\dimPH \Wcal$ for two main purposes. The first goal is to predict the generalization performance by using $\dimPH$. By this approach, we can use $\dimPH$ for hyperparameter tuning without having access to test data. The second goal is to incorporate $\dimPH$ as a regularizer to the optimization problem in order to improve generalization. Note that similar topological regularization strategies have already been proposed  \cite{bruel2019topology,chen2019topological} without a formal link to generalization. In this sense, our observations form the first step towards theoretically linking generalization and TDA.

In \cite{simsekli2020hausdorff}, to develop a computational approach, the authors first linked the intrinsic dimension to certain statistical properties of the underlying training algorithm, which can be then estimated. To do so, they required an additional topological regularity condition, which necessitates the existence of an `Ahlfors regular' measure defined on $\Wcal$, \ie, a finite Borel measure $\mu$ such that there exists $s,r_0>0$ where
$0<a r^{s} \leq \mu(B_r(x)) \leq b r^{s}<\infty$, 
holds for all $x \in \Wcal, 0<r \leq r_{0}$. This assumption was used to link the box dimension to another notion called Hausdorff dimension, which can be then linked to statistical properties of the training trajectories under further assumptions (see Section~\ref{sec:intro}). 
An interesting asset of our approach is that, 
we do not require this condition and thanks to the following result, we are able to develop an algorithm to directly estimate $\dimPH \Wcal$, while staying agnostic to the finer topological properties of $\Wcal$.

\begin{prop}
\label{prop:generalization}
Let $\Wcal \subset \mathbb{R}^d$ be a bounded set with $\dimPH \Wcal =: d^\star$. Then, for all $\veps >0$ and $\alpha \in (0, d^\star+\veps)$, there exists a constant $D_{\alpha,\veps}$, such that the following inequality holds for all $n \in \mathbb{N_+}$ and all collections $W_n = \{w_1, \dots, w_n\}$ with $w_i \in \Wcal$, $i=1,\dots,n$:
\begin{align}\label{eqn:persbound}
    E_{\alpha}^0(W_n) \leq D_{\alpha,\veps} n^{\frac{d^\star+\veps-\alpha}{d^\star+\veps}}.
\end{align}
\end{prop}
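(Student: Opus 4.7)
The plan is to exploit the definition of $\dimPH$ to get a uniform bound at a slightly larger exponent, then interpolate down to $\alpha$ by a Jensen/power-mean inequality, paying the price in the number of terms $n$.

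First, I would fix $\beta := d^\star + \veps$ and show that there exists a constant $C_\beta < \infty$ such that $E_\beta^0(W) \leq C_\beta$ for every finite $W \subset \Wcal$. By \Cref{def:phdim}, $d^\star$ is the infimum of the set $S := \{\alpha : \exists C, \forall \text{ finite } W \subset \Wcal, \, E_\alpha^0(W) < C\}$, so there is some $\alpha_0 \in S$ with $\alpha_0 < \beta$. To pass from $\alpha_0$ to $\beta$ I use that $\Wcal$ is bounded: every $\PH_0$ lifetime of a finite $W \subset \Wcal$ is bounded by the diameter $D := \mathrm{diam}(\Wcal)$, and therefore $|I(\gamma)|^\beta = |I(\gamma)|^{\alpha_0} \cdot |I(\gamma)|^{\beta - \alpha_0} \leq D^{\beta - \alpha_0}|I(\gamma)|^{\alpha_0}$. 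Summing over $\gamma \in \PH_0(\VR(W))$ gives $E_\beta^0(W) \leq D^{\beta-\alpha_0} C_{\alpha_0}$, so I may take $C_\beta := D^{\beta-\alpha_0} C_{\alpha_0}$.

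Next, I would apply a Jensen-type interpolation. For a finite set $W_n \subset \Wcal$ of cardinality $n$, the $0$-dimensional persistence diagram of $\VR(W_n)$ has exactly $n-1$ finite-lifetime bars (essentially the edges of the Euclidean minimum spanning tree of $W_n$); denote them $\ell_1, \dots, \ell_{n-1}$. Since $0 < \alpha < \beta$, the power-mean inequality (equivalently, Jensen applied to the concave map $x \mapsto x^{\alpha/\beta}$) yields
\begin{equation*}
\frac{1}{n-1}\sum_{i=1}^{n-1} \ell_i^\alpha \;\leq\; \left(\frac{1}{n-1}\sum_{i=1}^{n-1} \ell_i^\beta\right)^{\!\alpha/\beta},
\end{equation*}
which rearranges to
\begin{equation*}
E_\alpha^0(W_n) \;=\; \sum_{i=1}^{n-1} \ell_i^\alpha \;\leq\; (n-1)^{1-\alpha/\beta}\left(E_\beta^0(W_n)\right)^{\alpha/\beta} \;\leq\; C_\beta^{\alpha/\beta}\, n^{(\beta-\alpha)/\beta}.
\end{equation*}
Substituting $\beta = d^\star + \veps$ gives the claimed exponent $(d^\star + \veps - \alpha)/(d^\star + \veps)$ and the constant $D_{\alpha,\veps} := C_\beta^{\alpha/\beta}$.

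The only non-routine step is the first one: showing that the ``uniformly bounded'' set $S$ is upward-closed so that the infimum definition of $\dimPH$ actually supplies a bound at the specific exponent $\beta = d^\star + \veps$ we need. This is where boundedness of $\Wcal$ enters critically, and it is the only place where any assumption beyond the hypothesis is used; after that, the result is a clean Jensen interpolation together with the elementary combinatorial fact that $\PH_0$ of a point cloud of size $n$ has $n-1$ finite bars. No topological regularity of $\Wcal$ and no additional measure-theoretic structure is required.
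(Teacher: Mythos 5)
Your proof is correct, but it takes a genuinely different route from the paper's. The paper first invokes Theorem~\ref{thm:kozma} to replace $\dimPH\Wcal$ by $\dimBox\Wcal$, extracts the covering-number bound $N_\delta(\Wcal)\leq C_\veps\,\delta^{-(d^\star+\veps)}$ from Definition~\ref{def:mink}, and then cites \cite[Proposition 21]{schweinhart2020fractal} as a black box to convert that covering bound into the extremal bound on $E_\alpha^0$. You instead work entirely from Definition~\ref{def:phdim}: you first prove the (genuinely necessary, and correctly identified) upward-closure of the set of admissible exponents using the diameter bound on $\PH_0$ lifetimes, obtaining a uniform bound $E_\beta^0(W)\leq C_\beta$ at $\beta=d^\star+\veps$, and then interpolate down to $\alpha$ via the power-mean inequality together with the fact that $\PH_0(\VR(W_n))$ has at most $n-1$ finite bars (the MST edges). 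The resulting argument is self-contained and elementary --- it bypasses both Theorem~\ref{thm:kozma} and the external Proposition 21, and exposes the bound as a pure Jensen interpolation --- at the cost of being specific to $\PH_0$, since it leans on the exact bar count $n-1$; the paper's covering-number route is the one that generalizes to higher-order homology. All steps check out: the existence of $\alpha_0<\beta$ in the admissible set follows from $\inf$, the lifetimes are indeed bounded by $\mathrm{diam}(\Wcal)$, and the exponent $1-\alpha/\beta=(d^\star+\veps-\alpha)/(d^\star+\veps)$ matches \eqref{eqn:persbound}.
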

\begin{proof}
Since $\Wcal$ is bounded, we have $\dimBox \Wcal = d^\star$ by Theorem~\ref{thm:kozma}. Fix $\veps>0$.  Then, by Definition~\ref{def:mink}, there exists $\delta_0 = \delta_0(\veps) >0$ and a finite constant $C_\veps > 0$ such that for all $\delta\leq \delta_0$ the following inequality holds:
\begin{align}
    N_{\delta}(\Wcal) \leq C_\veps \delta^{-(d^\star+\veps)}.
\end{align}
Then, the result directly follows from \cite[Proposition 21]{schweinhart2020fractal}.
\end{proof}

This result suggests a simple strategy to estimate an upper bound of the intrinsic dimension from persistent homology. In particular, we note that rewriting \eqref{eqn:persbound} for logarithmic values give us that 
\begin{equation}
\left(1 - \frac{\alpha}{d^* + \epsilon}\right) \log n + \log D_{\alpha, \epsilon} \ge \log E_\alpha^0.    
\end{equation} 
If $\log E_\alpha^0$ and $\log n$ are sampled from the data and give an empirical slope $m$, then we see that $d^* + \epsilon \le \frac{m}{1 - \alpha}$. In many cases, we see that $d^* \approx \frac{\alpha}{1 - m}$ (as further explained in~\cref{subsec:ablate}), so we take $\frac{\alpha}{1 - m}$ as our PH dimension estimation. We provide the full algorithm for computing this from our sampled data in~\cref{algo:dimPH}. Note that our algorithm is similar to that proposed in \cite{Adams2020AFD}, although our method works for sets rather than probability measures. In our implementation we compute the homology by the celebrated Ripser package~\cite{ripser} unless otherwise specified. %

\paragraph{On computational complexity.}
Computing the Vietoris Rips complex is an active area of research, as the worst-case time complexity is meaningless due to natural sparsity~\cite{zomorodian2010fast}. Therefore, to calculate the time complexity of our estimator, we focus on analyzing the PH computation from the output simplices: calculating PH takes $O(p^w)$ time, where $w<2.4$ is the constant of matrix multiplication and $p$ is the number of simplices produced in the filtration~\cite{boissonnat2019}. Since we compute with $0^\text{th}$ order homology, this would imply that the computational complexity is $O(n^w)$, where $n$ is the number of points. In particular, this means that estimating the PH dimension would take $O(kn^w)$ time, where $k$ is the number of samples taken assuming that samples are evenly spaced in $[0,n]$.

\subsection{Regularizing Deep Networks via Persistent Homology}

Motivated by our results in~\cref{prop:generalization}, we theorize that controlling $\dim_{\mathrm{PH}} \mathcal{W}$ would help in reducing the generalization error. Towards this end, we develop a regularizer for our training procedure which seeks to minimize $\dim_{\mathrm{PH}} \mathcal{W}$ during train time. If we let $\mathcal{L}$ be our vanilla loss function, then we will instead optimize over our topological loss function $\mathcal{L}_\lambda := \mathcal{L} + \lambda \dim_{\mathrm{PH}} \mathcal{W}$, where $\lambda \ge 0$ controls the scale of the regularization and $\Wcal$ now denotes a sliding window of iterates (\eg, the latest $50$ iterates during training). This way, we aim to regularize the loss by considering the dimension of the ongoing training trajectory. 

In \cref{algo:dimPH}, we let $w_i$ be the stored weights from previous iterations for $i \in \{1, \dots, K - 1\}$ and let $w_K$ be the current weight iteration. Since the persistence diagram computation and linear regression are differentiable, this means that our estimate for $\dim_{\mathrm{PH}}$ is also differentiable, and, if $w_k$ is sampled as in~\cref{algo:dimPH}, is connected in the computation graph with $w_K$. We incorporate our regularizer into the network training using PyTorch~\cite{pytorch} and the associated persistent homology package \emph{torchph}~\cite{Hofer17a,Hofer19a}.
\insertimageC{1}{IDTA_lrcolor.pdf}{PH-dimension vs test accuracy for different models and datasets. The rows correspond to the model and dataset, and the columns correspond to the batch size (50 and 100 for the top and bottom row respectively). The graphed points are marked with different colors corresponding to the learning rate. Note that the PH dimension is inversely correlated with test accuracy and is thus positively correlated with generalization error.}{fig:IDTA}{t!}
\section{Experimental Evaluations}
\label{sec:exp}

This section presents our experimental results in two parts: (i) analyzing and quantifying generalization in practical deep networks on real data, (ii) ablation studies on a random diffusion process. In all the experiments we will assume that the intrinsic dimension is strictly larger than $1$, hence we will set $\alpha=1$, unless specified otherwise. Further details are reported in the supplementary document. 

\subsection{Analyzing and Visualizing Deep Networks}

\paragraph{Measuring generalization.}

We first verify our main claim by showing that our persistent homology dimension derived from topological analysis of the training trajectories correctly measures of generalization. To demonstrate this, we apply our analysis to a wide variety of networks, training procedures, and hyperparameters. In particular, we train AlexNet~\cite{krizhevsky2012imagenet}, a 5-layer (fcn-5) and 7-layer (fcn-7) fully connected networks, and a 9-layer convolutional netowork (cnn-9) on MNIST, CIFAR10 and CIFAR100 datasets for multiple batch sizes and learning rates until convergence. For AlexNet, we consider 1000 iterates prior to convergence and, for the others, we only consider 200. Then, we estimate $\dimPH$ on the last iterates by using \cref{algo:dimPH}. For varying $n$, we randomly pick $n$ of last iterates and compute $E_\alpha^0$, and then we use the relation given in \eqref{eqn:persbound}.   

We obtain the ground truth (GT) generalization error as the gap between training and test accuracies.~\cref{fig:IDTA} plots the PH-dimension with respect to test accuracy and signals a strong correlation of our PH-dimension and actual performance gap. The lower the PH-dimension, the higher the test accuracy. Note that this results aligns well with that of~\cite{simsekli2020hausdorff}.
The figure also shows that the intrinsic dimensions across different datasets can be similar, even if the parameters of the models can vary greatly. This supports the recent hypothesis that what matters for the generalization is the effective capacity and not the parameter count. In fact, the dimension should be as minimal as possible without collapsing important representation features onto the same dimension.
The findings in~\cref{fig:IDTA} are further augmented with results in~\cref{fig:opt}, where a similar pattern is observed on AlexNet and CIFAR100.

\insertimageC{1}{alexnet_optimizers.pdf}{(Estimated) persistent homology dimension vs generalization error (training accuracy - test accuracy) for different datasets (top row CIFAR10, bottom row CIFAR100) and optimizers on AlexNet. We plot the data points and lines of best fit. Note that the PH dimension is \emph{directly} correlated with the generalization error and is consistent across datasets and optimizers.\vspace{-1mm}}{fig:opt}{t!}

\paragraph{Can $\dimPH$ capture intrinsic properties of trajectories?}
After revealing that our ID estimation is a gauge for generalization, we set out to investigate whether it really hinges on the intrinsic properties of the data. We train several instances of 7-fcn for different learning rates and batch sizes. We compute the PH-dimension of each network using training trajectories. 
We visualize the following in the rows of~\cref{fig:cifar10fcn7} sorted by $\dimPH$: (i) $200\times 200$ distance matrix of the sequence of iterates $w_1,\dots,w_K$ (which is the basis for PH computations), (ii) corresponding $\log E_{\alpha=1}^0$ estimates as we sweep over $n$ in an increasing fashion,  (iii) persistence diagrams per each distance matrix. It is clear that there is a strong correlation between $\dimPH$ and the structure of the distance matrix. As dimension increases, matrix of distances become non-uniformly \emph{pixelated}. The slope estimated from the total edge lengths the second row is a quantity proportional to our dimension. Note that the slope decreases as our estimte increases (hence generalization tends to decrease). We further observe clusters emerging in the persistence diagram. The latter has also been reported for better generalizing networks, though using a different notion of a topological space~\cite{bruel2019topology}.

\insertimageC{1}{fcn7_cifar10_dimph_ealpha_PD_plot_cropped2.pdf}{We visualize topological information for a 7-layer fully connected network on CIFAR10 data. In the top row, we visualize the distance matrices computed between network weights corresponding to the last $200$ iterations of training. In the middle, we visualize the corresponding behavior of our estimator as we increase the number of samples. In the bottom row, we visualize the 0-th order persistent diagrams for the full data. As our PH dimension decreases, the matrix becomes more segmented, the estimator slope decreases, and the persistent diagram becomes sparser. We provide more information about these results in the supplement. \vspace{-3mm}}{fig:cifar10fcn7}{t!}

\paragraph{Is $\dimPH$ a real indicator of generalization?}
To quantitatively assess the quality of our complexity measure, we gather two statistics: (i) we report the average $p$-value over different batch sizes for AlexNet trained with SGD on the Cifar100 dataset. The value of $p=0.0157<0.05$ confirms the statistical significance. 
Next, we follow the recent literature~\cite{jiang2020neurips} and consult the Kendall correlation coefficient (KCC). Similar to the $p$-value experiment above, we compute KCC for AlexNet+SGD for different batch sizes $(64, 100, 128)$ and attain $(0.933, 0.357, 0.733)$ respectively. Note that, a positive correlation signals that the test gap closes as $\dimPH$ decreases. Both of these experiments agree with our theoretical insights that connect generalization to a topological characteristic of a neural network: intrinsic dimension of training trajectories. 
\vspace{-1mm}
\paragraph{Effect of different training algorithms.}
We also verify that  
our method is algorithm-agnostic and does not require assumptions on the training algorithm. In particular, we show that our above analyses extend to both the RMSProp \cite{Tieleman2012} and Adam \cite{Kingma2015AdamAM} optimizer.
Our results are visualized in~\cref{fig:opt}. We plot the dimension with respect to the generalization error for varying optimizers and batch sizes; our results verify that the generalization error (which is inversely related to the test accuracy) is positively correlated with the PH dimension. This corroborates our previous results in~\cref{fig:IDTA} and in particular shows that our dimension estimator of test gap is indeed algorithm-agnostic.

\begin{wrapfigure}[15]{r}{0.42\textwidth} %
\vspace{-7mm}
\includegraphics[width=0.41\textwidth]{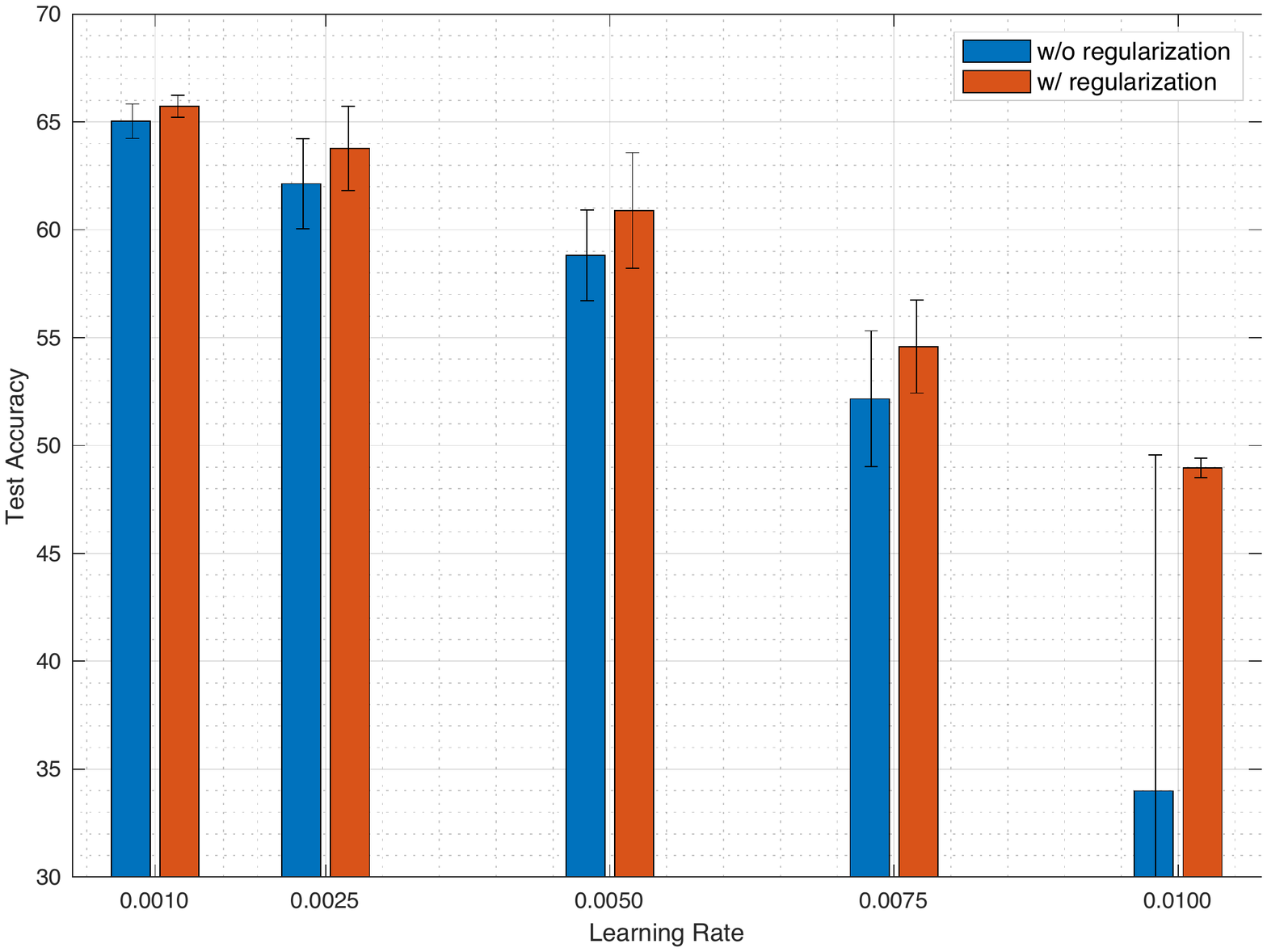}
\vspace{-3mm}
\caption{\small Effect of regularization on test accuracy for various learning rates. Our regularization is consistently able to produce higher accuracies, and this effect is more pronounced when the network has a lower test accuracy.}
\label{fig:reg}
\end{wrapfigure}
\vspace{-2mm}\paragraph{Encouraging generalization via regularization $\dimPH$.}
We furthermore verify that our topological regularizer is able to help control the test gap in accordance with our theory. We train a Lenet-5 network \cite{LeCun1998GradientbasedLA} on Cifar10 \cite{Krizhevsky2009LearningML} and compare a clean trianing with a training with our topological regularizer with $\lambda$ set to $1$. We train for $200$ epochs with a batch size of $128$ and report the train and test accuracies in~$\cref{fig:reg}$ over a variety of learning rates. We tested over $10$ trials and found that, with $p < 0.05$ for all cases except $\mathrm{lr}=0.01$, the results are different.

Our topological optimizer is able to produce the best improvements when our network is not able to converge well. These results show that our regularizer behaves as expected: the regularizer is able to recover poor training dynamics. We note that this experiment uses a simple architecture and as such, it presents a proof of concept. We do not aim for the state of the art results. Furthermore, we directly compared our approach with the generalization estimator of \cite{Corneanu2020ComputingTT}, which most closely resembles our construction. In particular, we found their method does not scale and is often numerically unreliable. For example, their methodology grows quadratically with respect to number of network weights and linearly with the dataset size, while our method does not scale much beyond memory usage with vectorized computation. Furthermore, for many of our test networks, their metric space construction (which is based off of the correlation between activation functions and used for the Vietoris-Rips complex) would be numerically brittle and result in degenerate persistent homology. These prevent~\cite{Corneanu2020ComputingTT} to be applicable in this scenario.

\subsection{Ablation Studies}\label{subsec:ablate}
To assess the quality of our dimension estimator, we now perform ablation studies, on a synthetic data whose the ground truth ID is known. To this end, we use the synthetic experimental setting presented in \cite{simsekli2020hausdorff} (see the supplementary document for details),
and we simulate a $d=128$ dimensional stable Levy process with varying number of points $100\leq n\leq 1500$ and tail indices $1\leq\beta\leq 2$. Note that the tail index equals the intrinsic dimension in this case, which is an order of magnitude lower for this experiment.\vspace{-1mm}
\paragraph{Can $\dimPH$ match the ground truth ID?}
We first try to predict the GT intrinsic dimension running~\cref{algo:dimPH} on this data. We also estimate the TwoNN dimension~\cite{facco2017estimating} to quantify how the state of the art ID estimators correlate with GT in such heavy tailed regime. Our results are plotted in~\cref{fig:diffplots}. Note that as $n$ increases our estimator becomes smoother and well approximates the GT up to a slight over-estimation, a repeatedly observed phenomenon~\cite{campadelli2015intrinsic}. TwoNN does not guarantee recovering the box-dimension. While it is found to be useful in estimating the ID of data~\cite{ansuini2019intrinsic}, we find it to be less desirable in a heavy-tailed regime as reflected in the plots. Our supplementary material provides further results on other, non-dynamics like synthetic dataset such as points on a sphere where TwoNN can perform better. We also include a robust line fitting variant of our approach $\PH_0$-RANSAC, where a random sample consensus is applied iteratively. Though, as our data is not outlier-corrupted, we do not observe a large improvement.\vspace{-1mm}

\insertimageC{1}{diffusion_plots.pdf}{Estimated intrinsic dimension vs ground-truth intrinsic dimension for different dimension estimators on synthetic diffusion data. Our $\mathrm{PH}_0$ (yellow) and $\mathrm{PH}_0\mathrm{RANSAC}$ (purple) estimators coincide as the linear regression step of our computation is well behaved. We note that our persistent homology dimension estimation is able to accurately recover the ground truth.\vspace{-3mm}}{fig:diffplots}{t!}

\begin{wrapfigure}[15]{r}{0.375\textwidth}
    \vspace{-4mm}
\includegraphics[width=0.375\textwidth]{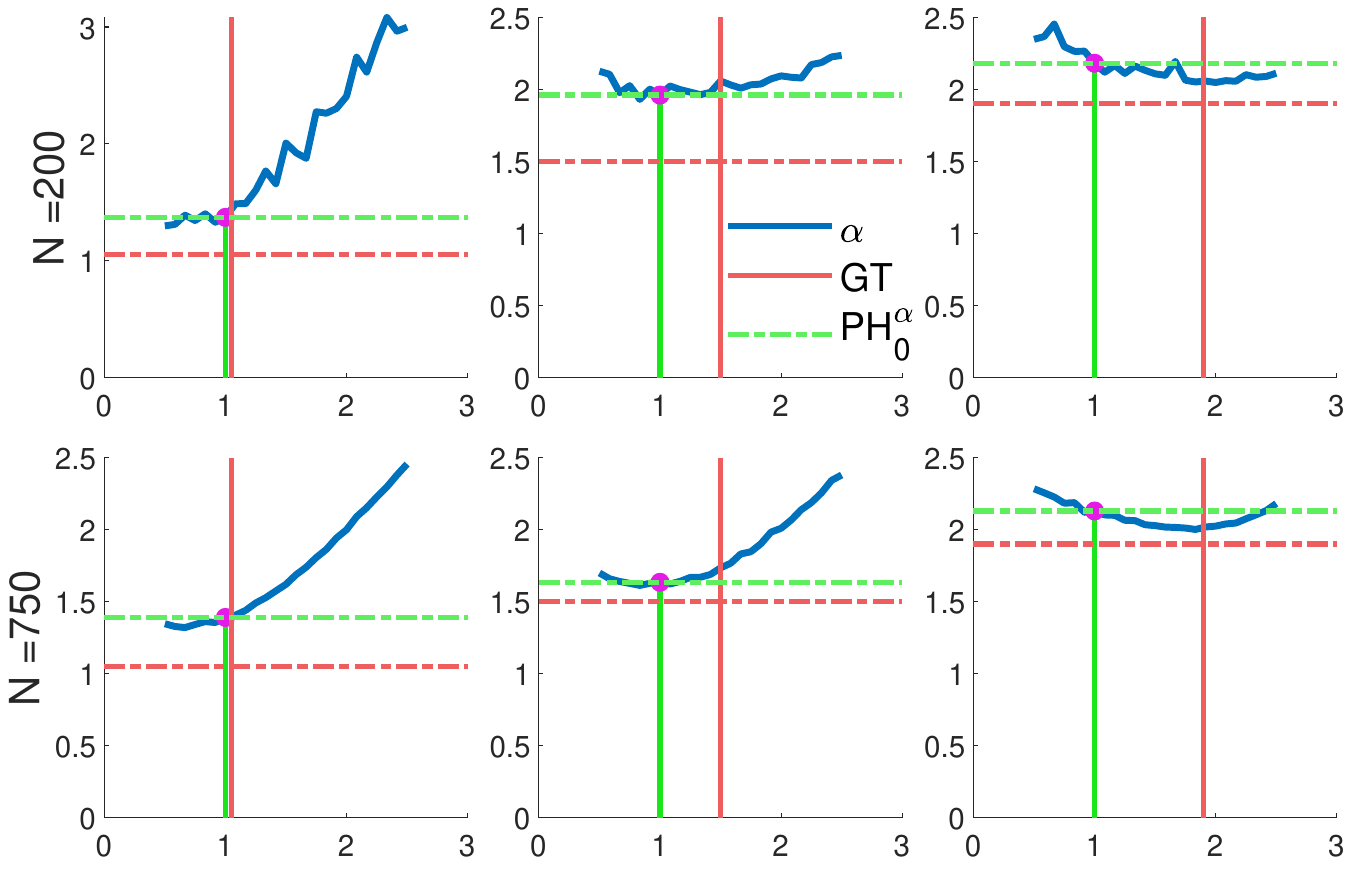}\vspace{-5mm}
    \caption{\small $\dimPH$ estimate versus various $\alpha$ on the synthetic diffusion data. Our estimate of $\alpha=1$ provides a very good estimate for a wide variety of intrinsic dimensions.}
    \label{fig:alphaDiff}
\end{wrapfigure}
\paragraph{Effect of $\alpha$ on dimension estimation.}
While our theory requires $\alpha$ to be smaller than the intrinsic dimension of the trajectories, in all of our experiments we fix $\alpha = 1.0$. It is of curiosity whether such choice hampers our estimates. To see the effect, we vary $\alpha$ in range $[0.5,2.5]$ and plot our estimates in~\cref{fig:alphaDiff}. It is observed (blue curve) that our dimension estimate follows a U-shaped trend with increasing $\alpha$. We indicate the GT ID by a dashed red line and our estimate as a dashed green line. Ideally, these two horizontal lines should overlap. It is noticeable that, given the oracle for GT ID, it might be possible to \emph{optimize for} an $\alpha^\star$. Yet, such information is not available for the deep networks. Nevertheless, $\alpha=1$ seems to yield reasonable performance and we leave the estimation of a better $\alpha$ for future work. We provide additional results in our supplementary material.

\section{Conclusion}
\label{sec:conclude}

In this paper, we developed novel connections between $\dimPH$ of the training trajectory and the generalization error. Using these insights, we proposed a method for estimating the $\dimPH$ from data and, unlike previous work \cite{simsekli2020hausdorff}, our approach 
does not presuppose any conditions on the trajectory and offers a simple algorithm. By leveraging the differentiability of PH computation, we showed that we can use $\dimPH$ as a regularizer during training, which improved the performance in different setups.

\textbf{Societal Impact and Limitations. } 
We believe that our study will not pose any negative societal or ethical consequences due to its theoretical nature. 
The main limitation of our study is that it solely considers the terms $E_\alpha^0$, whereas PH offers a much richer structure. Hence, as our next step, we will explore finer ways to incorporate PH in generalization performance. We will further extend our results in terms of dimensions of measures by using the techniques presented in  \cite{camuto2021fractal}.

\newpage

\newpage

\section*{Acknowledgements}
  Umut \c{S}im\c{s}ekli's research is supported by the French government under management of Agence Nationale de la Recherche as part of the ``Investissements d’avenir'' program, reference ANR-19-P3IA-0001 (PRAIRIE 3IA Institute).

\bibliographystyle{alpha}
\newcommand{\etalchar}[1]{$^{#1}$}

\newpage
\appendix
\section*{Appendices}

\section{Discussions}\label{sec:supp:discuss}

\paragraph{Connection to minimum spanning tree dimension. %
} 

In this section, we will describe another very related notion of dimension, called the \emph{minimum spanning tree} (MST) dimension. The MST dimension coincides with the PH dimension of order $0$, and provides further insights about the semantics of the PH dimension and more information about what the `distance-matrix' illustrations topologically represent in Figure~4 in the main document, and Figure~\ref{fig:cifar10vgg} in this document.

In order to define the MST dimension formally, let us introduce the required constructs. 
Let $W \subset \rset^d$ be a finite set of $K$ points, $W = \{w_1, \dots, w_K\}$. We consider a \emph{weighted fully connected graph} $\Graph$ generated from $W$, such that the vertices of the graph are the points in $W$, i.e., $\{w_1, \dots, w_K\}$, and the edge between the vertices $w_i$ and $w_j$ is set to the Euclidean distance between the vertices, i.e., $\|w_i -w_j\|$. Given this graph, we will consider spanning trees over $\Graph$, where a spanning tree over $\Graph$ is a tree whose nodes cover the vertices of $\Graph$ and the weights between the nodes are (still) the Euclidean distance between the nodes. 

In the rest of the section, with a slight abuse, we will use the notation \emph{``$T$ is a spanning over the set $W$''}, meaning that $T$ is a spanning tree over the graph $\Graph$ generated by the set $W$. The notation $e \in T$ will imply that $e$ is an edge in the tree $T$.

 \begin{dfn}[Total Edge Length]
     Let a $W \subset \rset^d$ be a {finite set} and $T$ be a spanning tree over $W$. Then, the total edge length of $T$ for ($0<\alpha<d$) is defined as:
     \begin{align}
         E_\alpha(T) = \sum\limits_{e\in T} |e|^\alpha = \sum\limits_{(w_i, w_j)\in T} \|w_i-w_j\|^\alpha. 
     \end{align}
 \end{dfn}

 \begin{dfn}[Minimum Spanning Tree (MST)] The minimum spanning tree (MST) is obtained as the minimizer $\mathrm{MST}(W) = \argmin_T E_1(T)$.
 \end{dfn}
 
 Now, we are ready to define the MST dimension.
  \begin{dfn}[MST Dimension \cite{kozma2006minimal}]
For a {bounded metric space} $\Wcal$, the MST dimension is defined as the infimal exponent $\alpha$ s.t. $E_\alpha(\mathrm{MST}(W))$ is uniformly bounded for \textbf{all finite point sets} $W \subset \Wcal$. Formally,
\begin{equation}
\dimMST \Wcal := \inf \big\{ \alpha \,:\,\exists C \in \rset_+; \>\>  E_\alpha(\mathrm{MST}(W)) \leq C, \quad \forall W \subset \Wcal, \> \mathrm{card}(W) < \infty \big\}
\end{equation}
where $\mathrm{card}(W)$ denotes the cardinality of $W$.
\end{dfn}

By the seminal work of Kozma \etal \cite{kozma2006minimal} (also \cite{schweinhart2019persistent}), we have that, for all bounded $\Wcal \subset \rset^d$:
\begin{align}
   \dimMST \Wcal = \dimPH^0 \Wcal = \dimBox \Wcal.
\end{align}
This relation provides further intuition over what topological properties of $\Wcal$ are taken into account by the MST dimension, hence the PH dimension. We observe that, given a finite set of points $W\subset \Wcal$, the MST dimension summarizes the topology of $\Wcal$ by the graph generated by $W$, which is essentially the distance matrix given in Figure~4 in the main paper, and Figures~\ref{fig:cifar10vgg} and \ref{fig:vggmnist} here. Then, the growth rate of the total edge lengths of the MSTs over $W \subset \Wcal$ determine the intrinsic dimension of $\Wcal$. This relationship provides a more formal illustration to the visualization provided in the experiments section: the structure of the distance matrices (e.g., the clustering behavior) has indeed a crucial role in the intrinsic dimension, hence in generalization performance by \cite{simsekli2020hausdorff}. Therefore, a fine inspection of the distance matrix obtained through the SGD iterations can be predictive of the generalization performance.

\paragraph{On the intrinsic dimension estimators.} 
The notion of \emph{intrinsic dimension} (ID) is introduced by Robert Bennett~\cite{bennett1969intrinsic} in the context of signal processing as the minimum number of parameters needed to generate a data description so that (i) information loss is minimized, (ii) the `intrinsic' structure characterizing the dataset is preserved. 
Following its definition, different ID estimators were proposed with different design goals such as computational feasibility, robustness to high dimensionality \& data scaling or accuracy~\cite{camastra2016intrinsic}. Most of these estimators either are strictly related to Hausdorff dimension (HD)~\cite{hausdorff1918dimension} or try to reconstruct the underlying smooth manifold, leading to the broader notion of \emph{manifold dimension} or \emph{local intrinsic dimension}. On one hand, the former, HD, is very hard to estimate in practice. On the other hand, explicitly reconstructing the manifold or its geodesics as in ISOMAP~\cite{tenenbaum2000global} can be computationally prohibitive. 

These challenges fostered the development of a literature populated with measures of various intrinsic aspects of data or its samples~\cite{facco2017estimating, campadelli2015intrinsic,schroeder2009fractals}. Geometric approaches like \emph{fractal} measures usually replace the Hausdorff dimension with a lower or upper bound aiming for scalable algorithms with reasonable complexity.
Kegl~\etal~\cite{kegl2002intrinsic} try to directly estimate an upper bound, the Box Counting or Minkowski dimension. Though, their Packing Numbers are not multiscale and have quadratic complexity in the cardinality of the data. 
Correlation dimension (\emph{strange attractor dimension} in chaotic dynamical systems) on the other hand is a lower bound on the Box Counting dimension~\cite{grassberger2004measuring} and is efficient to compute. Another fractal dimension estimator, Maximum Likelihood Estimation (MLE) dimension~\cite{levina2005maximum} estimates the expectation value of Correlation dimension. Finally, Information dimension~\cite{isham1993statistical} measures the fractal dimension of a probability distribution.
As we show in Thm. 1 of the main paper, our PH dimension measures the upper bound, Minkowski Dimension exactly and is related to the generalization in neural networks.

The other line of work trying to estimate a \emph{manifold dimension} makes the assumption that the dataset or in our case the weights of neural networks lie on a smooth and compact manifold. The \emph{geodesic minimum spanning tree length} of Costa and Hero try to approximate the geodesics of this supposedly continuous manifold~\cite{costa2003manifold}. Lin and Zha~\cite{lin2008riemannian} reconstruct the underlying \emph{Riemannian Manifold} with its local charts through 1D simplices ($k$-nearest edges). The dimension is then estimated by performing a local PCA~\cite{jolliffe1986principal}. It is interesting and noteworthy that such parallel track of works based upon a discretization of the underlying smooth manifold can yield algorithms very similar to ours. Though, we are not aware of well established connections between these fractal based and manifold based approaches. Evaluating the performance of these manifold dimension methods is out of the scope of this paper since unlike our method, they are not explicitly linked to generalization.

We are, however, curious to see how the estimators related to Box Counting dimension perform in our setting. To do that, we use a similar synthetic diffusion process as we did in the main paper. This allows us to have access to the ground truth dimension as the tail index of a $\beta$-stable process (see below for the definition). In~\cref{fig:dimest} we put correlation dimension (Corr), MLE dimension (MLE) and our dimension ($\PHrobust$) at test. For sanity check, we also include an \emph{eigen-value} based ID estimator \emph{PCA Dimension}~\cite{jolliffe1986principal}. It is seen that all of these methods over-estimate the dimension in small-data regime and approach the correct dimension as $n$ grows. As expected, our method -- explicitly connected to the tail index -- outperforms the other methods. Correlation dimension performs very reasonably in this experiment, only slightly worse. It is also noteworthy that the PCA Dimension estimates are not well correlated with GT tail index (they do not even fit the axis limits of our plot) and therefore cannot be used in our framework to measure generalization in deep networks. This is because (i) they are expected to fail on nonlinear manifolds such as diffusion or network trajectories; (ii) they do not measure per se the fractal dimension, which are useful in evaluating the model order in time series data or nonlinear dynamical systems.

\insertimageC{1}{dimest.pdf}{Comparing different dimensions estimators for different number of sample points $n$. We simulate the point cloud data as the trajectories of a diffusion using $\beta$-stable processes in $d=128$ ambient dimensions. The tail index $\beta$ mapped to the $x$-axis of all plots corresponds to the ground truth (GT) intrinsic dimension. $\beta\rightarrow 1$ models a heavy-tailed process whereas $\beta\rightarrow 2$ is a Brownian motion. Our method and correlation dimension can capture the intrinsic properties of this data. Yet, our method performs slightly better as it is a theoretically grounded measure of the tail index.}{fig:dimest}{t!}

Among different dimension estimators, the Two-NN estimator~\cite{facco2017estimating} has recently been found to be of practical value in the context of measuring the intrinsic dimension of data representations, or the layers of the network~\cite{ansuini2019intrinsic}. However, this estimator is based upon the assumption of approximately constant density on the length-scale defined by the typical distance to the second neighbor. In the main paper we showed that this assumption breaks for the processes that exhibit heavy tails, such as the trajectories of deep network training algorithms, making Two-NN a bad estimator. In~\cref{fig:sphereplots}, we show that for data where this assumption holds -- such as randomly sampled points on a $d$-dimensional hypersphere, Two-NN can be a good estimator. Yet, our estimator works both in this case and in the case of heavy tailed diffusion-like processes.

\paragraph{On the interpretation of Figure 4 -- middle panel.}

We shall note two important points regarding the plots given in Figure 4, middle panel:

\begin{enumerate}
    \item The slope of the fitted line essentially determines the intrinsic dimension (see the paragraph after Proposition 2 to see the mathematical relation) and we illustrate this behavior visually in the plots: the slope of the data computed by using persistent homology determines the intrinsic dimension and hence determines the generalization gap; which is arguably a surprising result.
    \item Even though we fit a line to the empirical data, it does not automatically mean that the empirical data should have a clear linear trend. These plots further demonstrate that we fit a line to a data, which has a very strong linear trend, hence our estimations are not jeopardized by noise or model mismatch.
\end{enumerate}

\paragraph{On the interpretation of distance matrices.}
The columns and rows of the distance matrices shown here and in the main paper are organized with respect to the iteration indices after convergence: we first train the networks until convergence, and then run an additional 200 iterations near the local minimum. Then, the $(i,j)$-th entry of a distance matrix corresponds to the Euclidean distance between the $i$-th iterate and the $j$-th iterate (of these additional 200 iterations).

Though we do not yet have a rigorous proof, we believe that the qualitative difference in these diagrams is due to the heavy-tailed behavior of the SGD algorithm \cite{simsekli2019tail,gurbuzbalaban2021heavy,hodgkinson2020multiplicative}. Let us illustrate this point with a simpler example: consider the Levy $\beta$-stable process used in Section 5.2 of the main paper. This is a well-known heavy-tailed process which becomes heavier-tailed when the parameter $\beta$ decreases. A classical result in probability theory~\cite{blumenthal1960some} shows that the Hausdorff dimension of the trajectories of this process is (almost surely) equal to its tail-index $\beta$ for any $d\geq 2$. This means that as the process gets heavier-tailed, its intrinsic dimension decreases.

On the other hand, if we investigate the geometric properties of heavy-tailed processes, we mainly observe the following behavior: the process behaves like a diffusion for some amount of time, then it makes a big jump, then it repeats this procedure. For a visual illustration, we can recommend Figure 1 in~\cite{simsekli2020hausdorff}. Finally, this \textit{diffuse+jump} structure creates clusters, where each cluster is created during the \textit{diffuse} period and a new cluster is initiated at every large \textit{jump}.

Now, coming back to the question of interpreting the distance matrices, the non-uniform pixelations in the distance matrices are well-known indications of clusters in data: in the top row of Figure 4, the large dark squares indicate different clusters, and we observe that these clusters become more prominent for smaller PH dimensions. To sum up:
\begin{enumerate}[label=(\roman*),itemsep=0pt,topsep=0pt,leftmargin=*,align=left]
    \item SGD can show heavy-tailed behavior when the learning-rate/batch-size is chosen appropriately~\cite{simsekli2020hausdorff,gurbuzbalaban2021heavy}.
    \item Heavy-tails result in a topology with smaller dimension~\cite{simsekli2020hausdorff}, and creates a clustering behavior in the trajectories.
    \item  The clustering behavior results in non-uniform pixelations in the distance matrices and gaps in the persistence diagrams.
\end{enumerate}

\insertimageC{1}{sphere_plots.pdf}{Results of different dimension estimators for synthetic sphere data. $\mathrm{GT}$, $\mathrm{TwoNN}$, $\PH_0$ and $\PH_0\mathrm{RANSAC}$ denote the ground truth, TwoNN estimator~\cite{facco2017estimating}, our standard persistent homology estimator and its robust line fitting variant, respectively.}{fig:sphereplots}{t}
\section{Details of Experimental Evaluation}
\label{supp:exp_details}
\paragraph{Analysis} For our analysis, we train with the following architectures. For basic networks, we include fully connected models with $5$ (fcn-5) and $7$ (fcn-7) layers and a $9$-layer convolutional network (cnn-9). All networks have ReLu activation. We also include several more standard networks such as AlexNet \cite{krizhevsky2012imagenet}.
We train using variants of SGD without momentum or weight-decay. We trained the networks with different step-sizes in the range $[0.001, 0.1]$ and batch-sizes in the set $\{64,100, 128\}$. We trained all models until convergence.
We implemented our code in PyTorch and conducted the experiments on $4$ GeForce GTX 1080 GPUs. We used the classical CIFAR10 and MNIST datasets and for measuring the training and test accuracies, we use the standard training-test splits.

\paragraph{The $\beta$-stable Levy process} In our experiments, we illustrated our approach on estimating the intrinsic dimension of $\beta$-stable Levy processes\footnote{This processes are often called the $\alpha$-stable processes, where $\alpha$ denotes the stability index of the process \cite{samorodnitsky1994stable}. However, to avoid confusion with the parameter $\alpha$ in $E_\alpha$, we denote the stability index with the symbol $\beta$.}, whose definition is given as follows.

For $\beta \in (0,2]$, a $\beta$-stable L\'{e}vy process $\{\mathrm{L}^\beta_t\}_{t\geq 0}$ in $\rset^d$ with the initial point $\mathrm{L}^\beta_0 = 0$,
is defined by the following properties:
\begin{enumerate}[label=(\roman*),itemsep=0pt,topsep=0pt,leftmargin=*,align=left]
\item For $N\in \mathbb{N}$ and $t_0<t_1 < \cdots < t_N$, the increments $ (\mathrm{L}^\beta_{t_{i}} - \mathrm{L}^\beta_{t_{i-1}} )$ are independent for all $i$.%
\item For any $t>s>0$,  $(\mathrm{L}^\beta_t - \mathrm{L}^\beta_s)$ and $\mathrm{L}^\beta_{t-s}$ have the same distribution, and $\mathrm{L}^\beta_1$ has the characteristic function $\exp(- \|\omega\|^\beta)$. %
\item $\mathrm{L}^\beta_t$ is continuous in probability, i.e., for all $\delta >0$ and $s\geq 0$, $\mathbb{P}(|\mathrm{L}^\beta_t - \mathrm{L}^\beta_s| > \delta) \rightarrow 0$ as $t \rightarrow s$.
\end{enumerate}

The stability index (also called the tail-index) $\beta$ determines the tail-behavior of the process (e.g., the process is heavy-tailed whenever $\beta <2$) \cite{samorodnitsky1994stable}. On the other hand, interestingly $\beta$ further determines the geometry of the trajectories of the process, in the sense that, \cite{blumenthal1960some} showed that the Hausdorff dimension of the trajectory $\{\mathrm{L}^\beta_t\}_{t\in [0, T]}$ is equal to the tail-index $\beta$ almost surely. Similar results have also been shown for the box dimension of the trajectories as well \cite{falconer2004fractal}. 
Thanks to their flexibility, $\beta$-stable processes have been used in the recent deep learning theory literature \cite{simsekli2019tail,zhou2020towards,simsekli2020hausdorff}.

\section{Additional Evaluations}
\label{supp:exp}

\begin{wrapfigure}[12]{r}{0.495\textwidth} %
\vspace{-4mm}
\includegraphics[width=0.495\textwidth]{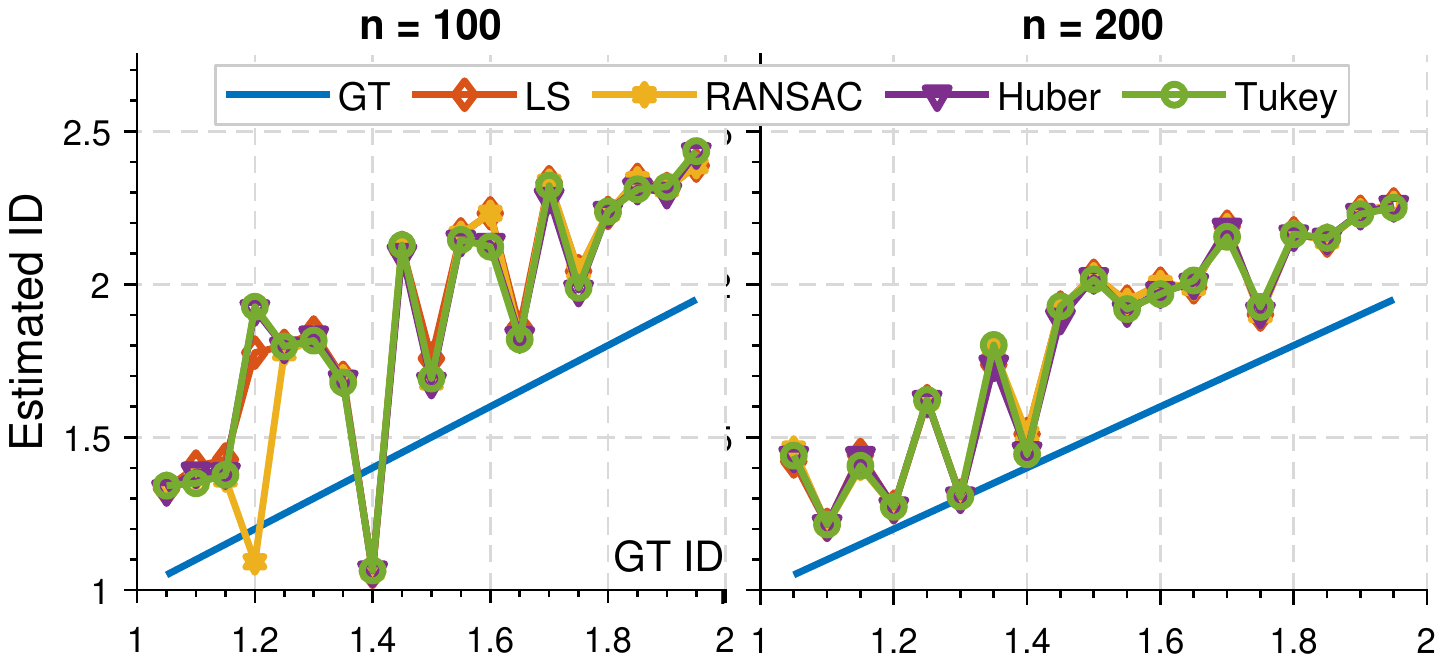}
\vspace{-3mm}
\caption{Effect of different robust functions on dimension estimation.}
    \label{fig:robust}
\end{wrapfigure}

\paragraph{Use of different robust functions.}
Our dimension estimate is essentially the slope of a line fit to the log-sequence of total edge lengths. Since a slight change in the slope can directly influence the resulting dimension, it is of interest to see the effect of different line estimators. To this end, we evaluate the naive least squares (LS) fit as well as a series of methods that are robust to the outliers. In particular, we deploy random sample consensus (RANSAC)~\cite{fischler1981random} as well as Huber~\cite{huber1992robust}, Tukey~\cite{beaton1974fitting} robust M-estimators. As seen in~\cref{fig:robust}, the difference between them in dimension estimation is negligible. This indicates that, as expected, the data points composed of total edge lengths are not corrupted by outliers -- every iterate (network) in the trajectory yields a valid data point. We used the RANSAC estimator as it gave a marginally better estimate.

\paragraph{Effect of regularization on the intrinsic dimension.} For our regularization procedure, we train the LeNet \cite{LeCun1998GradientbasedLA} architecture on Cifar10 with a batch size of 128. We again optimize with a constant step size SGD varying the learning rate from $0.01$ to $0.1$. The exact training procedure is summarized in~\cref{algo:reg} where we show how to apply our dimension constraint in a sliding fashion.

\begin{algorithm2e} [t!]
\DontPrintSemicolon
\SetKwInOut{Input}{input}
\SetKwInOut{Output}{output}
\Input{The set of data $\mathcal{D}=\{(x_i, y_i)\}_{i=1}^D$, gradient update method $\mathrm{gd}$, number of networks for dimension computation $K$, regularization constant $\lambda$, number of steps $N$, batch size $b$, regular training loss $\mathcal{L}$}
\Output{Regularized trained network $W$}
$n \gets 0$, initial network $W_0$, network list $Q$.
\While{$n\leq N$}{
$\{x_i, y_i\}_{\mathcal{S} \subset [D]} \gets \mathrm{sample}(W, b)$\,{\color{purple} \small \tcp{random sampling}}
$Q \gets Q \cup \{W_n\}$\,{\color{purple} \small \tcp{append the last network}}
$\mathcal{L}_{top} \gets \mathcal{L}(x_i, y_i) + \lambda \cdot  \mathrm{compute\_dims}(Q, K)$\, {\color{purple} \small \tcp{Compute loss of last $K$ models}} 
$W_{n + 1} = \mathrm{gd}(W_n, \nabla_{W_n} \mathcal{L}_{top}), n \gets n + 1$\,{\color{purple} \small \tcp{update weights w.r.t. gradients}}
}
Return $W_N$
\caption{Topological Regularization Training}
\label{algo:reg}
\end{algorithm2e}

Here, we test to see if our regularization would be able to control the intrinsic dimension of the training trajectories. In particular, we calculate the intrinsic dimension of our regularization experiments and report them in~\cref{fig:reg_dim}. It is noticeable that our topological regularizer is indeed able to decrease the intrinsic dimension for different training regimes with different learning rates. This property is well reflected to the advantage in Fig. 5 of the main paper.
\begin{figure}[ht]
    \centering
    \begin{tabular}{c c c c c c}
        Learning rate & 0.001 & 0.0025 & 0.005 & 0.0075 & 0.01 \\ 
        \hline Unregularized & 3.21 & 3.35 & 3.54 & 3.67 & 4.2 \\ 
        Regularized & 3.13 & 3.25 & 3.30 & 3.43 & 3.75
    \end{tabular}
    \caption{The intrinsic dimensions (ID) of unregularized vs topologically regularized network training. Across the different learning rates, our regularized network has a better behaved (lower) ID.}
    \label{fig:reg_dim}
\end{figure}
\insertimageC{1}{vgg_mnist_dimph_ealpha_plot.pdf}{Visualization for cnn-9 network on MNIST dataset sorted by persistent homology dimension. (\textbf{top}) the distance matrices computed between the network weights corresponding to the last $200$ iterates. (\textbf{bottom}): the behavior of the logarithmic $\alpha$-weighted lifetime sums $\log E_\alpha^0$, derived from the persistent homology with respect to $\log(n)$.}{fig:vggmnist}{t}
\insertimageC{1}{vgg_cifar10_dimph_ealpha_plot.pdf}{Visualization for VGG-like cnn-9 network on Cifar10 dataset sorted by PH dimension. Please see~\cref{fig:vggmnist} for more information on the plots.}{fig:cifar10vgg}{t}
\insertimageC{1}{PDdiffusion_cropped.pdf}{Persistence Diagrams for the diffusion processes of varying ambient dimension and intrinsic dimension (tail index).}{fig:PD}{t!}
\paragraph{Further visualizations on cnn-9 network.}
In the main paper we have visualized the distance matrices, total edge lengths $E_\alpha^0$, and the persistence diagrams for the 7-fcn network. We have observed a clustering effect with decreasing persistent homology dimension. One could question whether the same effect is observable in other networks and for other datasets. Hence, we now visualize the same quantities for our cnn-9 (VGG-like) network both on Cifar10 and MNIST datasets. Figures~\ref{fig:vggmnist} and~\ref{fig:cifar10vgg} plot the distance matrices and $\log E_\alpha^0$-values in a fashion identical to the main paper for MNIST and Cifar10 datasets, respectively. Note that in both figures there is a clear \emph{pixelation} effect on the distance matrices as the PH-dimension decreases. This indicates a clustering as expected from a diffusion process. We measures this by the slope of the line as shown in the bottom rows. We conclude that the behavior reported in the main paper is consistent across datasets and networks.

\paragraph{Further visualizations on synthetic data.}
Last but not least, we show the effect of the change in intrinsic dimension to the persistence diagram. To this end, we synthesize point clouds of different ambient and intrinsic dimensions from the aforementioned diffusion process. We then compute the persistence diagram of all the point clouds and arrange them into a grid as shown in ~\cref{fig:PD}. This reveals two observations: (i) As $\alpha\rightarrow 1$, the persistence diagram gets closer (in the relative sense of the word) to the diagonal line while forming distinct clusters of points. Note that as death is always larger than birth, by definition, these plots can only be upper-triangular. As we approach $\alpha\rightarrow 2$, the process starts resembling a Brownian motion and we observe a rather dispersed distribution while the points move away from the diagonal line (relatively), at least for the case we are interested in, $H_0$.

\end{document}